\newcommand\independent{\protect\mathpalette{\protect\independenT}{\perp}}
\def\independenT#1#2{\mathrel{\rlap{ $#1#2$ }\mkern2mu{#1#2}}}
\newtheorem{theorem}{Theorem}
\title{Learning Disentangled Semantic Representation for Domain Adaptation}
\author{
Ruichu Cai$^1$
\and
Zijian Li$^1$\and
Pengfei Wei$^{2}$\and
Jie Qiao$^{1}$\and
Kun Zhang$^{3}$ \And
Zhifeng Hao$^{4}$
\affiliations
$^1$School of Computers, Guangdong University of Technology, China\\
$^2$School of Computer Science and Engineering, Nanyang Technological University, Singapore\\
$^3$Department of Philosophy, Carnegie Mellon University, USA\\
$^4$School of Mathematics and Big Data, Foshan University, China\\
\emails
cairuichu@gdut.edu.cn,
leizigin@gmail.com,
wpf89928@gmail.com,
kunz1@cmu.edu,
qiaojie.chn@gmail.com,
zfhao@gdut.edu.cn
}
\begin{document}

\maketitle

\begin{abstract}

Domain adaptation is an important but challenging task. Most of the existing domain adaptation methods struggle to extract the domain-invariant representation on the feature space with entangling domain information and semantic information. Different from previous efforts on the entangled feature space, we aim to extract the domain invariant semantic information in the latent disentangled semantic representation (DSR) of the data. In DSR, we assume the data generation process is controlled by two independent sets of variables, i.e., the semantic latent variables and the domain latent variables. Under the above assumption, we employ a variational auto-encoder to reconstruct the semantic latent variables and domain latent variables behind the data. We further devise a dual adversarial network to disentangle these two sets of reconstructed latent variables. The disentangled semantic latent variables are finally adapted across the domains. Experimental studies testify that our model yields state-of-the-art performance on several domain adaptation benchmark datasets.

\end{abstract}

\section{Introduction}
Domain adaptation is an important but challenging task. Since the acquisition of a large labeled data is usually either expensive or impractical, how to train on the unlabeled target domain with the help of labeled source domain has become a particular focus. However, this learning scheme suffers from a well-known phenomenon named \textit{domain shift}, leading an urging motivation in building an adaptive classifier that can efficiently transfer the source labeled data under the domain shift, this problem is also known as \textit{unsupervised domain adaptation}.

\begin{figure*}[htbp]
	\centering
	\subfigure[Latent manifold of the data generation.]{
		\includegraphics[width=0.2993\textwidth]{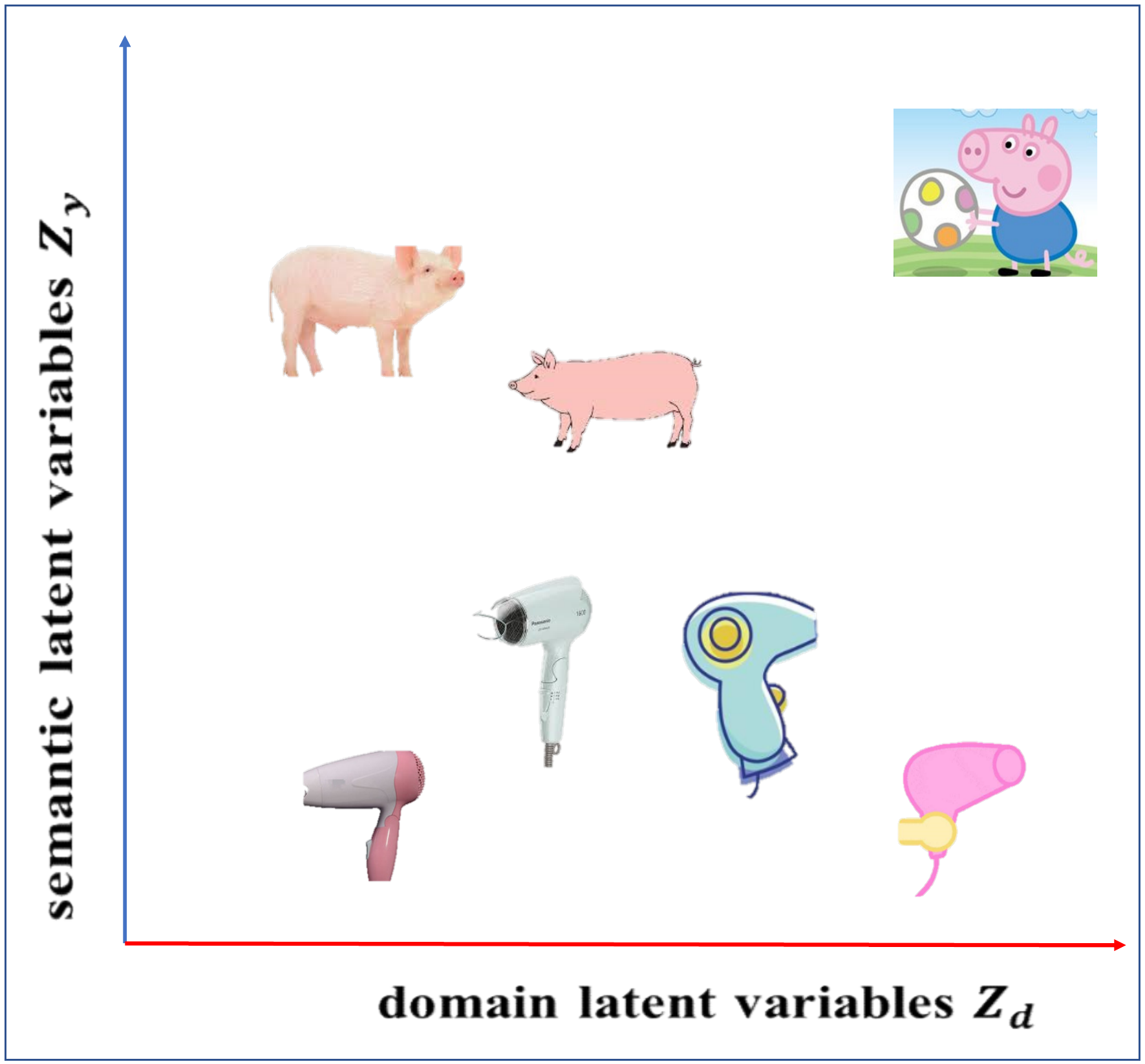}
		\label{fig:intro11}
	}
	\subfigure[Distorted feature manifold with residual domain information]{
		\includegraphics[width=0.3\textwidth]{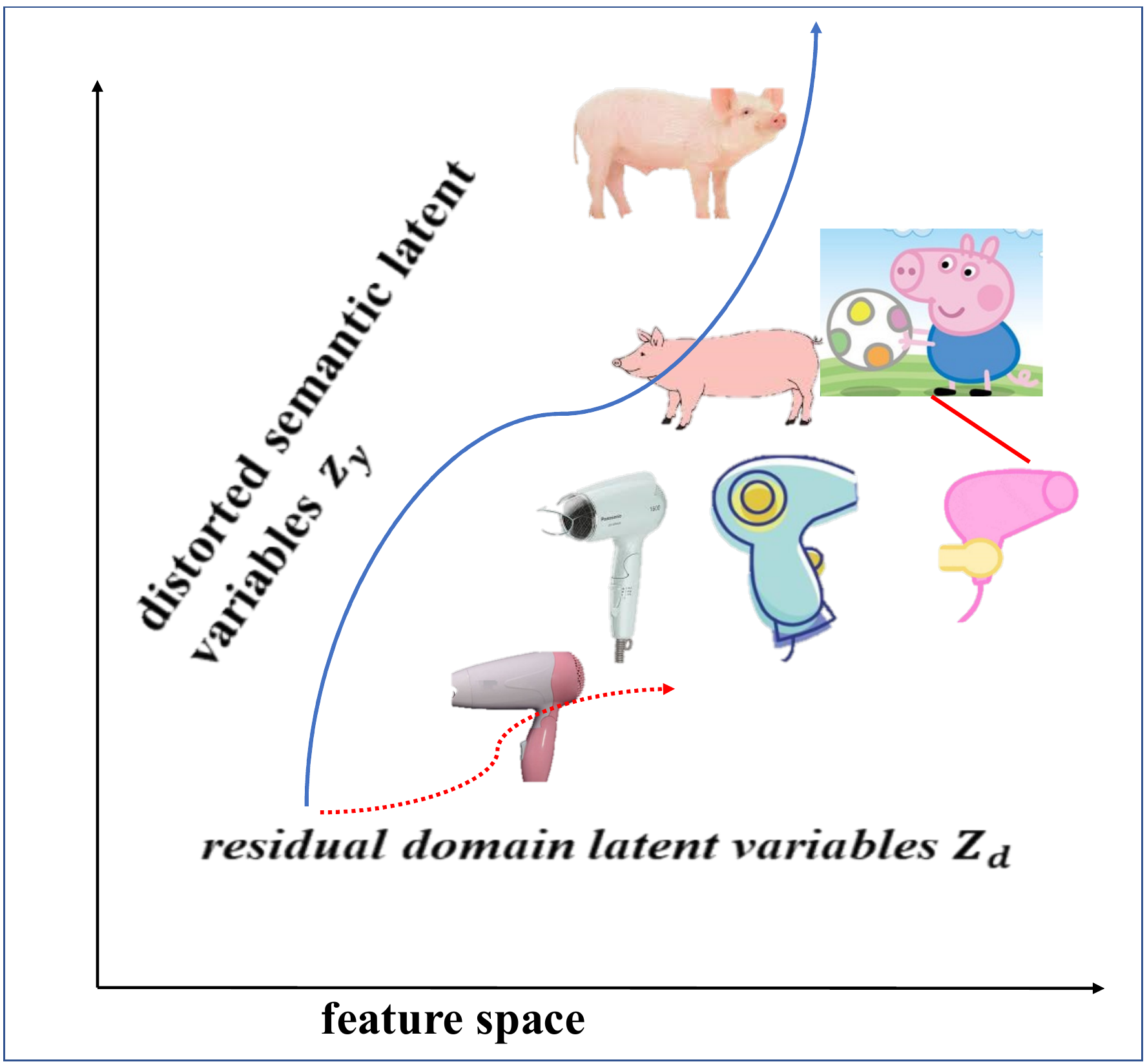}
		\label{fig:intro12}
	}
	\subfigure[Disentangle semantic manifold]{
		\includegraphics[width=0.3052\textwidth]{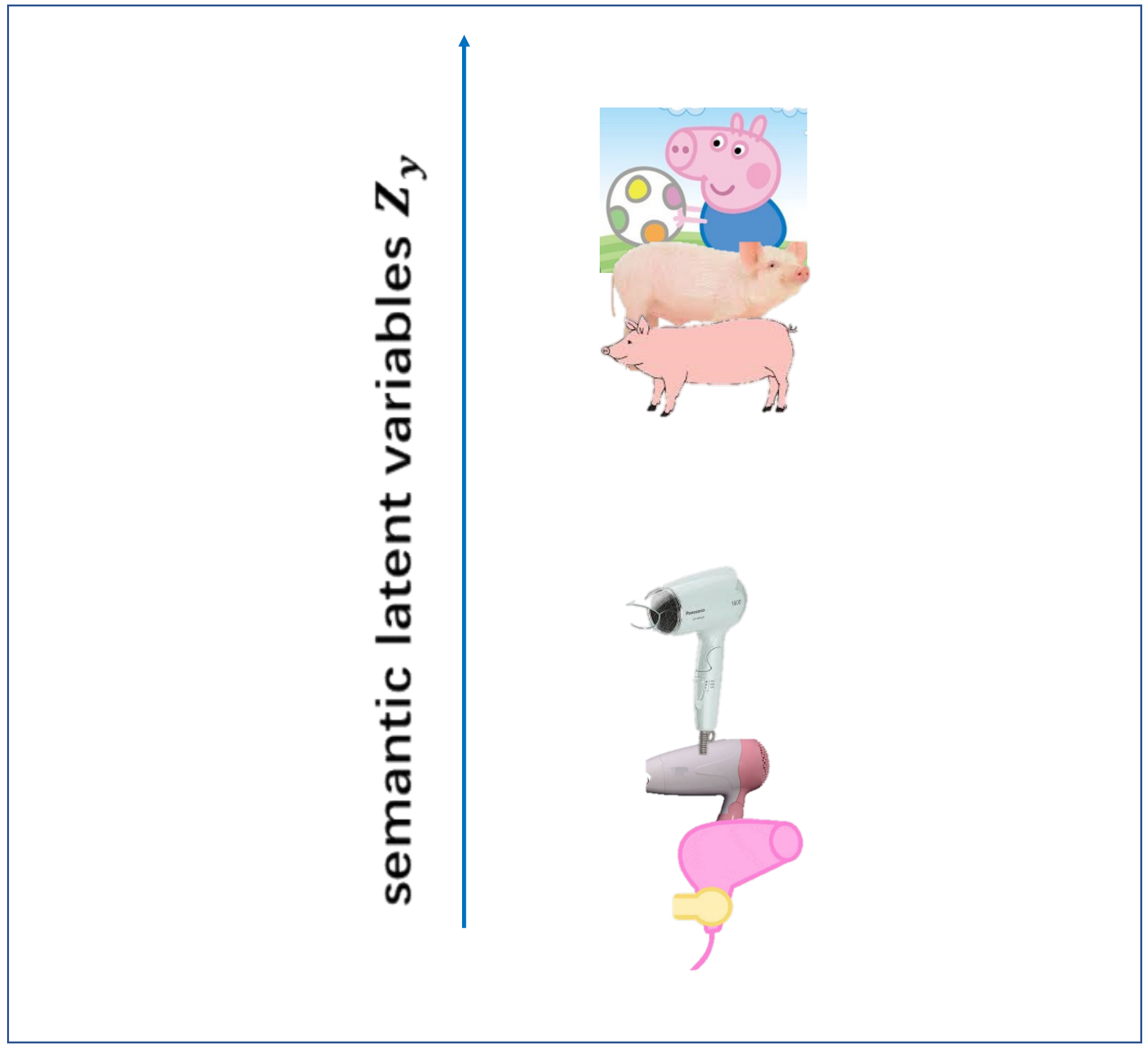}
		\label{fig:intro13}
	}
	\caption{A toy domain adaption example with ``pig'' and ``hair drier'' samples on the ``art'' and ``camera'' domain. (a) The data generation process is controlled by the disentangled domain latent variables and semantic latent variables in the latent manifold. (b) The samples are distorted on the feature manifold, and the residual domain information results in the false alignment between the Peppa Pig and the hair drier. (c) The samples are well distributed on the disentangled semantic manifold. ($\textit{best view in color}$)}
	\label{fig:intro}
\end{figure*}

An essential approach in unsupervised domain adaptation is to understand what the domain-invariant representation across the domains is and how to find it \cite{zhang2013domain,pan2011domain,gong2012geodesic}. Typical methodologies explored in the literature include the feature alignment approaches that extract domain-invariant representation by minimizing the discrepancy between the feature distributions inside deep feed-forward architectures \cite{tzeng2014deep,long2015learning,long2016unsupervised,long2017deep}, and the adversarial learning approaches that extract the representation by deceiving the domain discriminators \cite{tzeng2015simultaneous,ganin2015unsupervised,ganin2016domain,long2018conditional}. Recently, a fine-grained semantic alignment has also been proposed in order to extract domain-invariant representation under the consideration of the semantic information \cite{xie2018learning,zhang2018domain,chen2018progressive}. However, most of them require target pseudo labels in order to minimize the discrepancies across domains within the same labels, and thus resulting the error accumulation due to the uncertainty of the pseudo-labeling accuracy. 

Due to the complex manifold structures underlying the data distributions, these methods mainly suffer a false alignment problem \cite{pei2018multi,xie2018learning}. As shown in Figure \ref{fig:intro11}, the data generation process is controlled by the disentangled domain latent variables and semantic latent variables in the latent manifold. The ideal \textit{semantic} position of the two types of labels -- pig and hair drier -- should be placed relatively upper and lower on the manifold according to the semantic axis, and the ideal \textit{domain} position within the same labels should be placed in the left and right according to the domain axis. However, as shown in Figure \ref{fig:intro12}, once the domain information is not completely removed, samples are distorted on the feature manifold, leading to the false alignment problem, e.g., the Peppa Pig looks like a pink hair-drier and thus the feature of the Peppa Pig might be near to that of the hair-drier in the distorted feature manifold.  


Motivated by the example of Figure \ref{fig:intro12}, the underlying cause of the false alignment problem is the entanglement of the semantic and domain information.  More specifically, samples are controlled by two sets of independent latent variables $\mathbf{z}_y$ and $\mathbf{z}_d$. However, these two sets of latent variables are highly tangled and distorted on the high dimensional feature manifold space. It is very challenging to remove the domain information while preserving the semantic information on a complex tangled feature manifold space. 

In this work, motivated by the disentanglement property of the multiple explanatory factors in the representation learning literature \cite{bengio2013representation,dinh2014nice}, we propose a Disentangle Semantic Representation learning model (DSR in short) by assuming the independence between the semantic variables $\bm{z}_y$ and the domain variables $\bm{z}_d$. Our DSR reconstructs the disentangled latent space and simultaneously uses the semantic variables to predict the target labels. The underlying intuition, as shown in Figure \ref{fig:intro13}, is that by using the disentangle semantic latent variables that are independent to the domain latent variables, we can easily classify the labels into two categories merely based on the semantic axis $\mathbf{z}_y$. We employ a variational auto-encoder to reconstruct the disentangled semantic and domain latent variables, with the help of a dual adversarial network. The extensive experimental studies demonstrate that DSR outperforms state-of-the-art unsupervised domain adaptation methods on standard domain adaptation benchmarks.

%

\section{Related Work}

Deep feature learning methods have been shown very effective for unsupervised domain adaptation. The key idea of the deep feature learning is to extract domain-invariant representation by aligning different domains. Some works utilize maximum mean discrepancy(MMD) to realize the domain alignment. \cite{tzeng2014deep} learns domain-invariant representation by adding an adaptation layer and an additional domain confusion loss; \cite{long2015learning} reduces the domain discrepancy by using an optimal multi-kernel selection method. \cite{long2016unsupervised} assumes that the source classifier and target classifier differ by a residual function and enable classifier adaptation by plugging several layers with reference to the target classifier. \cite{long2017deep} learns domain-invariant representation by aligning the joint distributions of multiple domain-specific layers across domains based on a joint maximum mean discrepancy criterion. 
Other works introduce a domain adversial layer for the domain alignment. \cite{ganin2015unsupervised} introduces a gradient reversal layer to fool the domain classifier and extracts the domain-invariant representation, \cite{tzeng2017adversarial} borrows the idea of generative adversarial network(GAN)\cite{goodfellow2014generative} and proposes a novel unified framework for adversarial domain adaptation. 

Recent studies also show the benefits of semantic alignment to unsupervised domain adaptation. With the assumption that the distance among the samples with the same label but from different domains should be as small as possible, \cite{xie2018learning} learns semantic representation by aligning labeled source centroid and pseudo-labeled target centroid. \cite{chen2018progressive} aligns the discriminative features across domains progressively and effectively, via utilizing the intra-class variation in the target domain. \cite{deng2018domain} proposes a similarity constrained alignment method which enforces a similarity-preserving constraint to maintain class-level relations among the source and target samples.

However, most of the semantic alignment methods require target pseudo labels to minimize the variety discrepancies across domains, and thus resulting the error accumulation due to the uncertainty of the pseudo-labeling accuracy. In this work, we employ the concept of variational auto-encoder \cite{kingma2013auto} and adversarial learning to extract the domain invariant semantic representation. 

\section{Disentangled Semantic Representation Model}
In this work, we focus on the unsupervised domain adaptation problem that uses the labeled samples $ D_{S}=\left\{\bm{x}_{i}^{S}, y_{i}^{S}\right\}_{i=1}^{n_{S}} $ on the source domain to classify the unlabeled samples $ D_{T}=\left\{\bm{x}_{j}^{T}\right\}_{j=1}^{n_{T}} $ on the target domain. The goal of this paper is to understand:$(1)$ what the domain-invariant representation across domains is, and $(2)$ how to design a framework that can extract such a domain-invariant representation. 

\begin{figure}[htbp]
    \centering
    \includegraphics[width=0.3\columnwidth]{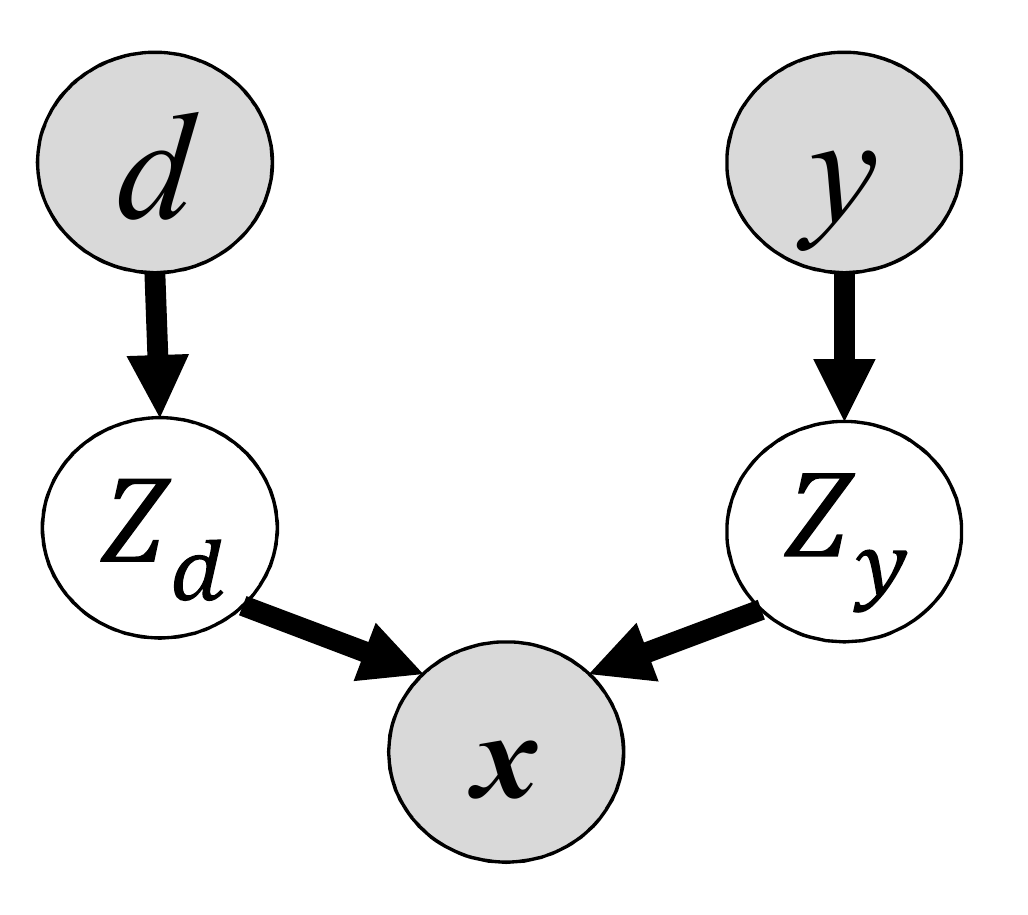}
    \caption{The causal model of data generation process, which are controlled by the latent variables $\bm{z}_d$ and $\bm{z}_y$.}
    \label{fig:generation}
\end{figure}
Regarding the first point, we start from the causal mechanism behind the data generation process as shown in Figure. \ref{fig:generation}. Given $\bm{x}$, it is generated from two independent latent variables, .i.e, $\bm{z}_d$ encodes the domain information and $\bm{z}_y$ encodes the semantic information. $\mathbf{z}_d\in \mathbb{R}^{K_d}$ and $\mathbf{z}_y\in \mathbb{R}^{K_y}$ denote the semantic latent variables and the domain latent variables respectively. Considering that the domain information may differ considerably across domains, we induce that the semantic latent variables play an important role in extracting the domain-invariant representation. Let $\bm{z}=\{\mathbf{z}_y, \mathbf{z}_d \}$. By further developing the independence property in the latent space, we also assume that $\mathbf{z}_{y} \independent \mathbf{z}_{d}$. 

Regarding the second point, with the above data generative mechanism, we propose a disentangled semantic representation (DSR) domain adaptation framework by first reconstructing the two independent latent variables via variational auto-encoder and then disentangling them through a dual adversarial training network. The key structure of the proposed framework is given in Figure. \ref{fig:model}. 

\definecolor{a}{RGB}{0,176,240}
\definecolor{b}{RGB}{146, 208, 80}

As shown in the upper part of Figure. \ref{fig:model}, the ``Reconstruction'' architecture, we first obtain a feature 
via a backbone feature extractor $G(.) $ .e.g ResNet, and then use the VAE-like scheme to reconstruct the feature by first encoding it into the latent variables $\bm{z}$, then use the latent variables to reconstruct the feature $G(\bm{x})$ from two independent latent variables $\bm{z}_y$ and $\bm{z}_d$. However, unlike the vanilla VAE, we further design a ``Disentanglement'' architecture as shown in the green dot box of Figure. \ref{fig:model}. In this architecture, two adversarial modules are placed under the semantic latent variables and domain latent variables respectively. For the label adversarial learning module on the left side, it aims to pull all the semantic information into $\bm{z}_y$ and push all the domain information from $\bm{z}_d$. For the domain adversarial learning module on the right side, it aims to pull all the domain information into $\bm{z}_d$ and push all the semantic information from $\bm{z}_y$. By doing so, we can obtain those domain-invariant semantic information without the contamination of the domain information.

We introduce more technical details of our proposed framework in the following section.

\begin{figure}[htbp] 
	\centering
	\includegraphics[width=0.48\textwidth]{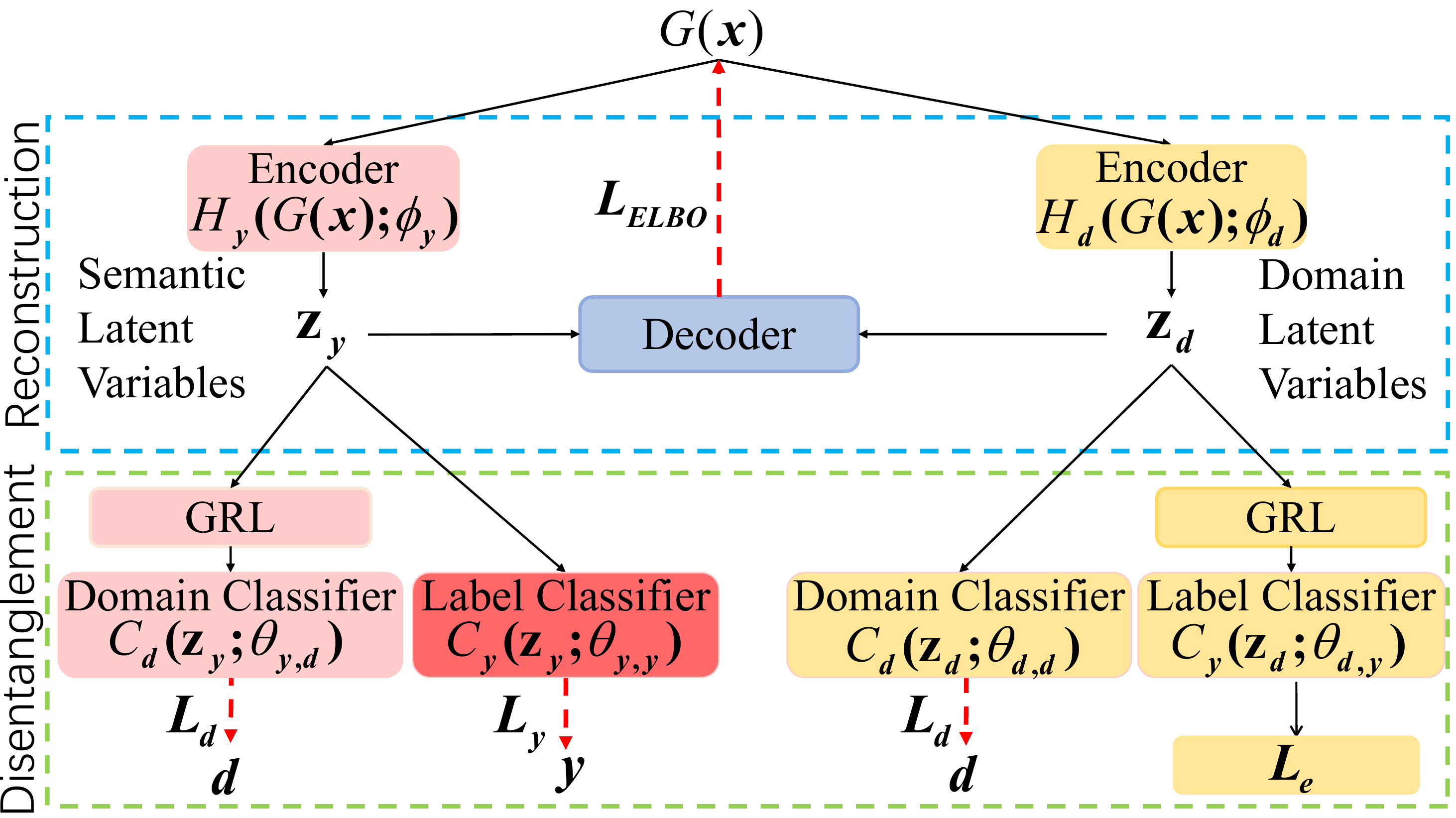}
	\caption{The framework of the Disentangled Semantic Representation model. In the reconstruction block (marked with the blue dashed lines), the variational auto-encoder is used to recover the semantic latent variables ($\bm{z}_y$) and the domain latent variables ($\bm{z}_d$). In the disentanglement block (marked with the green dashed lines), a dual adversarial network is used to disentangle the latent variables. $H_y$ and $H_d$ are the encoders for the semantic and domain information respectively. $C_y$ and $C_d$ are the classifiers for the label and domain respectively. GRL is a gradient reversal layer that multiplies the gradient by a negative constant. ($\textit{best view in color}$)}
	\label{fig:model}
\end{figure}

\subsection{Semantic Latent Variables Reconstruction}
For the reconstruction architecture in DSR framework, we follow the configuration in VAE. We denote $q_{\phi}(\bm{z} | \bm{x}) $ as the \textit{encoder} with respect to $\bm{\phi}$ to approximate the intractable true posterior $p(\bm{z} | \bm{x})$. The variational lower bound of the marginal likelihood is given as follow
\begin{multline}
 \mathcal{L}_{\mathrm{ELBO}}(\bm{\phi}, \theta_r)=\\-D_{KL}\left(q_{\bm{\phi}}(\bm{z} | \bm{x})| | P(\bm{z})\right)+ \mathbb{E}_{q_{\bm{\phi}}(\bm{z} | \bm{x})}\left[\log P_{\theta_r}(\bm{x} | \bm{z})\right] ,
 \label{eq:elob1}
\end{multline}
where $ P_{\theta_r}(\bm{x} | \bm{z})$ denotes the \textit{decoder} with respect to the parameters $\theta_r$ and $P(\bm{z})$ is the prior distribution. Then, we further decompose the latent variables $\bm{z}$ into $\bm{z_y}$ and $\bm{z_d}$. Eq. \ref{eq:elob1} can be derived as follows:
\begin{multline}
\begin{aligned}
\mathcal{L}_{\mathrm{ELBO}} (\phi _{y} ,\phi _{d} ,\theta _{r} )=-D_{KL}( q_{\phi _{y}}(\bm{z}_{y} |G(\bm{x} )) \| P(\bm{z}_{y}))\\
-D_{KL}( q_{\phi _{d}}(\bm{z}_{d} |G(\bm{x} )) \| P(\bm{z}_{d}))\\
+\mathbb{E}_{q_{\phi _{y,d}} (\bm{z}_{y} ,\bm{z}_{d} |G(\bm{x} ))}[\log P_{\theta _{r}} (G(\bm{x} )|\bm{z}_{y} ,\bm{z}_{d} )] .
\end{aligned}
\end{multline}
Here, we assume that $P(\bm{z}_{y}),P(\bm{z}_{d}) \sim \mathcal{N} (\bm{0},{\mathbf{I}})$, and $ \phi_y $ and $\phi_d$ are the parameters of the encoder. Similar to VAE, by applying a reparameterization trick, we use MLP $H_y( G(\mathbf{x}) ;\phi_y )$ and $H_d( G(\mathbf{x}) ;\phi_d )$ as the universal approximator of $q$ to encode the data into $\bm{z}_y$ and $\bm{z}_d$ respectively.

\subsection{Semantic Latent Variables Disentanglement}

For the disentanglement architecture of the DSR framework, it consists of two adversarial modules working together following the typical configuration in \cite{ganin2015unsupervised}. On the left of the Figure. \ref{fig:model} under the semantic latent variables $\bm{z}_y$ is the \textit{label adversarial learning module} that fuses the semantic information and excludes all the domain information. This is done by using a label classifier $C_y$ and a domain classifier $C_d$. To exclude the domain information, we use a gradient reversal layer (GRL) for $C_d$. As a result, the parameters $ \phi_y$ in $H_y$ are learned by maximizing the loss $L_d$ of $C_d$ and simultaneously minimizing the loss $L_y$ of $C_y$. The parameters $ \theta_{y,d}$ of $ C_d $ are learned by $L_d$. The overall objective function of label adversarial learning module is shown as follows:
\begin{multline}
\begin{aligned}
\mathcal{L}_{sem}&(\phi_y, \theta_{y,y}, \theta_{y,d})=\\ &\frac{\delta}{n_{S}} \sum_{x_{i}^{s} \in D_{S}} L_{y}\left(C_y\left(H_{y}\left(G(\bm{x});\phi_y\right);\theta_{y,y}\right), y_{i}\right) \\-& \frac{\lambda}{n}\sum_{x_{i} \in\left(D_{S}, D_{T}\right)} L_{d}\left(C_d\left(H_{y}\left(G(\bm{x});\phi_y\right);\theta_{y,d}\right), d_{i}\right), 
\end{aligned}
\end{multline}

where $ n=n_{S}+n_{T} $, $ \lambda $ are a trade-off parameters that balance the two objectives and $\delta$ is the parameter that controls 
the weight of $L_y$. A bigger $\delta$ enables the label classifier to learn more semantic information. The default value of $\delta$ is 1, and we try different values in order to validate the individual contributions of the domain adversarial learning module in the section 4.

Similarly, on the right-side adversarial module is the \textit{domain adversarial learning module} that fuses the domain information to $\bm{z}_d$ and excludes the semantic information from $\bm{z}_d$. The GRL is placed on the label classifier above in order to absorb all the domain information from $\bm{z}_y$. However, unlike the semantic module, we do not use cross-entropy as the label loss, because of the unsupervised learning in the target domain. In order to utilize the data in the target domain, we employ maximum entropy loss $L_e$ for the label classifier $C_y$. As a result, the parameters $ \phi_d$ in $H_d$ are learned by maximizing the loss $L_e$ of label classifier $C_y$ but minimizing the loss $L_d$ of label classifier $C_d$. In addition, the parameters $ \theta_{d,y}$ of label classifier $ C_y $ are learned by minimizing its own loss $L_e$. The objective of the domain adversarial learning module is shown as follow
\begin{multline}
\begin{aligned}
\mathcal{L}_{dom}&(\phi_d, \theta_{d,d}, \theta_{d,y})=\\ &\frac{1}{n} \cdot \sum_{x_{i} \in\left(D_{S}, D_{T}\right)} L_{d}\left(C_d\left(H_{d}\left(G(\bm{x});\phi_d\right);\theta_{d,d}\right), d_{i}\right)\\-&\frac{\omega}{n} \cdot \sum_{x_{i} \in\left(D_{S}, D_{T}\right)} L_{E}\left(C_y\left(H_{d}\left(G(\bm{x});\phi_d\right);\theta_{d,y}\right)\right), 
\end{aligned}
\end{multline}
where $\omega$ is the trade-off parameter between the two objectives that shapes the feature during.

\subsection{Model Summary}
By combining the reconstruction and the disentanglement, we summarize the model as follows.

The total loss of the proposed disentangled semantic representation learning for domain adaptation model is formulated as:
\begin{multline}
\begin{aligned}
\mathcal{L}(\phi_y, \theta_{y,d}, \theta_{y,y}, \phi_{d},& \theta_{d,d}, \theta_{d,y}, \theta_r) = \\&\mathcal{L}_{\mathrm{ELOB}}+\beta \mathcal{L} _{sem}+\gamma \mathcal{L}_{dom},
\end{aligned}
\end{multline}
where $\beta$ and $\gamma$ are the hyper-parameters that is not very sensitive and we set $\beta$=1 and $\gamma$=1.

Under the above objective function our model is trained on the source domain using the following procedure 
\begin{multline}
(\hat{\phi_y}, \hat{\theta_{y,y}}, \hat{\phi_d}, \hat{\theta_{d,y}}, \hat{\theta_r}) = \\\mathop{\arg\min}_{\phi_y, \theta_{y,y}, \phi_d, \theta_{d,y}, \theta_r}\mathcal{L}(\phi_y, \theta_{y,d}, \theta_{y,y}, \phi_{d}, \theta_{d,d}, \theta_{d,y}, \theta_r)\\
(\hat{\theta_{y,d}}, \hat{\theta_{d,d}}) = \mathop{\arg\max}_{\theta_{y,d}, \theta_{d,d}}\mathcal{L}(\phi_y, \theta_{y,d}, \theta_{y,y}, \phi_{d}, \theta_{d,d}, \theta_{d,y}, \theta_r).
\end{multline}

The following classifier with the trained optimal parameters is adapted to the target domains. 
\begin{equation}
y = C_y\left(H_{y}\left(G(\bm{x});\hat{\phi_y}\right);\hat{\theta_{y,y}}\right).
\end{equation}
\subsection{Analysis}
In this section, we first show that the error will be reduced under the domain-invariant feature space, and second we further develop an upper bound for the target generalization error.

Following the instruction in \cite{ben2007analysis}. Let $\displaystyle R:\mathcal{X}\rightarrow \mathcal{Z}$ denote the representation function, where $R\triangleq H\circ G$ follows the definition in Section 2. We denote $D_{S}$ as the source distribution over $\displaystyle \mathcal{X}$ and $\tilde{D}_{S}$ as the induced distribution over the feature space $\mathcal{Z}$, i.e., $\displaystyle \operatorname{Pr}_{\tilde{\mathcal{D}}_{S}} [B]\stackrel{\operatorname{def}}{=}\operatorname{Pr}_{\mathcal{D}_{S}}\left[\mathcal{R}^{-1} (B)\right]$ for any measurable event $\displaystyle B$. Similarly, we use parallel notation, $D_{T}$, $\displaystyle \tilde{D}_{T}$ for the target domain. The error on the source domain with a hypothesis $\displaystyle h$ is defined as
\begin{equation}
\begin{aligned}
\epsilon _{S} (h)= & \mathrm{E}_{\mathbf{z} \sim \tilde{D}_{S}}[\mathrm{E}_{y\sim \tilde{f} (\mathbf{z} )}[ y\neq h(\mathbf{z} )]]\\
= & \mathrm{E}_{\mathbf{z} \sim \tilde{D}_{S}} [C(\mathbf{z} )-h(\mathbf{z} )],
\end{aligned}
\label{eq:error}
\end{equation}
where $\displaystyle C:\mathcal{Z}\rightarrow [ 0,1]$ is the label classier defined on $\displaystyle \tilde{D}_{S}$. 
\begin{theorem}
	\label{thm:1}
Assume that the semantic and the domain factors are independent, i.e., $\mathbf{z}_{y} \independent \mathbf{z}_{d}$. Let $\mathbf{z} =\{\mathbf{z}_{y} ,\mathbf{z}_{d}\}$, and the error on the disentangled source and target domain with a hypothesis $\displaystyle h$ is 
\begin{equation}
\begin{aligned}
\epsilon ^{y}_{S} (h)=\epsilon _{S} (h)-\alpha _{S},\\
\epsilon ^{y}_{T} (h)=\epsilon _{T} (h)-\alpha _{T},
\end{aligned}
\end{equation}
where $\displaystyle \mathrm{\alpha _{S}} \coloneqq \mathrm{E}_{\mathbf{z}_{d} \sim \tilde{\mathcal{D}}_{S}}[ C(\mathbf{z}_{d}) -h(\mathbf{z}_{d})]$ and $\mathrm{\epsilon ^{y}_{S} (h)\coloneqq E}_{\mathbf{z}_{y} \sim \tilde{\mathcal{D}}_{S}}[ C(\mathbf{z}_{y}) -h(\mathbf{z}_{y})]$ denotes the error of DSR with respect to $h$ in the source domain, while $\mathrm{\epsilon ^{y}_{T} (h)}$ denotes the error of DSR in target domain.
\end{theorem}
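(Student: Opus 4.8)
The plan is to reduce the theorem to a single additive decomposition of the source (and, by the identical argument, target) error, namely
\[
\epsilon_S(h) = \epsilon^y_S(h) + \alpha_S,
\]
after which the two displayed identities follow by rearrangement. Observe that the statement specifies $\epsilon^y_S(h)$ twice: once implicitly as $\epsilon_S(h)-\alpha_S$, and once explicitly as $\mathrm{E}_{\mathbf{z}_y \sim \tilde{D}_S}[C(\mathbf{z}_y)-h(\mathbf{z}_y)]$. The content of the theorem is precisely that these two expressions coincide, so my target is to show that the joint error over $\mathbf{z}=\{\mathbf{z}_y,\mathbf{z}_d\}$ splits as the sum of the semantic marginal error $\epsilon^y_S(h)$ and the domain marginal error $\alpha_S$.

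First I would invoke the independence assumption $\mathbf{z}_{y}\independent\mathbf{z}_{d}$ to factorize the induced feature distribution $\tilde{D}_S$ into the product of its $\mathbf{z}_y$- and $\mathbf{z}_d$-marginals, and hence rewrite the error of Eq.~\eqref{eq:error} as an iterated expectation
\[
\epsilon_S(h) = \mathrm{E}_{\mathbf{z}_y \sim \tilde{D}_S}\,\mathrm{E}_{\mathbf{z}_d \sim \tilde{D}_S}\big[C(\mathbf{z}_y,\mathbf{z}_d) - h(\mathbf{z}_y,\mathbf{z}_d)\big].
\]
Next I would decompose the integrand across the two latent blocks as $C(\mathbf{z})-h(\mathbf{z}) = [C(\mathbf{z}_y)-h(\mathbf{z}_y)] + [C(\mathbf{z}_d)-h(\mathbf{z}_d)]$, so that each summand depends on only one of the two independent factors. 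Linearity of expectation together with the product factorization then collapses each summand onto its own marginal, yielding $\mathrm{E}_{\mathbf{z}_y}[C(\mathbf{z}_y)-h(\mathbf{z}_y)] + \mathrm{E}_{\mathbf{z}_d}[C(\mathbf{z}_d)-h(\mathbf{z}_d)] = \epsilon^y_S(h) + \alpha_S$, which is exactly the desired decomposition. Rearranging gives the first identity, and repeating the argument verbatim with $\tilde{D}_T$ in place of $\tilde{D}_S$ gives $\epsilon^y_T(h)=\epsilon_T(h)-\alpha_T$.

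The hard part will be justifying the additive separability $C(\mathbf{z})-h(\mathbf{z}) = [C(\mathbf{z}_y)-h(\mathbf{z}_y)] + [C(\mathbf{z}_d)-h(\mathbf{z}_d)]$, since this single step carries the whole argument and is \emph{not} automatic from independence alone; indeed, without it the marginal quantities $C(\mathbf{z}_y)$ and $C(\mathbf{z}_d)$ are not even well defined. I would ground it in the causal generative model of Figure~\ref{fig:generation}: because $\mathbf{x}$ is produced by the two independent sources $\mathbf{z}_y$ and $\mathbf{z}_d$, the label-relevant signal processed by the classifier $C$ (and matched by the hypothesis $h$) decomposes into a semantic contribution driven by $\mathbf{z}_y$ and a residual domain contribution driven by $\mathbf{z}_d$, so that $\alpha_S$ measures exactly the spurious domain-induced error that disentanglement aims to drive to zero. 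I would therefore state this additive structure explicitly as the operative consequence of the independence assumption (or as a mild regularity condition on $C$ and $h$); once it is in place, the remaining manipulations are routine applications of linearity of expectation and the product form of the independent joint, and the two identities follow immediately.
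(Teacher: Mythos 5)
Your proposal takes essentially the same route as the paper's own proof: factorize $\tilde{\mathcal{D}}_S$ using $\mathbf{z}_y \independent \mathbf{z}_d$, split the joint error $\mathrm{E}_{(\mathbf{z}_y,\mathbf{z}_d)\sim\tilde{\mathcal{D}}_S}[C(\mathbf{z})-h(\mathbf{z})]$ into the two marginal terms $\epsilon^y_S(h)+\alpha_S$, rearrange, and repeat verbatim for the target domain. The step you single out as carrying the whole argument --- the additive separability $C(\mathbf{z})-h(\mathbf{z})=[C(\mathbf{z}_y)-h(\mathbf{z}_y)]+[C(\mathbf{z}_d)-h(\mathbf{z}_d)]$, without which $C(\mathbf{z}_y)$ and $h(\mathbf{z}_y)$ are not even well defined --- is exactly the step the paper also performs with no justification beyond the remark that independence lets ``the distribution be decomposed into two parts so as to the error,'' so stating it as an explicit assumption, as you propose, is a faithful (indeed more careful) rendering of the same argument rather than a genuinely different one.
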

\begin{proof}
Since $\mathbf{z}_{y} \independent \mathbf{z}_{d}$, we can further derive Eq. \ref{eq:error} as follow,
\begin{equation}
\label{eq:thm1}
\begin{aligned}
\epsilon _{S} (h)= & \mathrm{E}_{(\mathbf{z}_{y} ,\mathbf{z}_{d} )\sim \tilde{\mathcal{D}}_{S}} [C(\mathbf{z} )-h(\mathbf{z} )]\\
= & \underbrace{\mathrm{E}_{z_{y} \sim \tilde{\mathcal{D}}_{S_{y}}} [C(\mathbf{z}_{y} )-h(\mathbf{z}_{y} )]+}_{\mathrm{\epsilon ^{y}_{S} (h)}}\\
& \underbrace{\mathrm{E}_{z_{d} \sim \tilde{\mathcal{D}}_{S_{d}}} [C(\mathbf{z}_{d} )-h(\mathbf{z}_{d} )]}_{\mathrm{\alpha _{S}}}\\
= & \mathrm{\epsilon ^{y}_{S} (h)} +\mathrm{\alpha _{S}} .
\end{aligned}
\end{equation}
In the second equality, based on the independence property between $\displaystyle \mathbf{z}_{y}$ and $\displaystyle \mathbf{z}_{d}$, the distribution of $\displaystyle \tilde{\mathcal{D}}_{S}$ can be decomposed into two part so as to the error.

Similarly, we have $\epsilon ^{y}_{T} (h)=\epsilon _{T} (h)-\alpha _{T}$, by decomposing the $\tilde{\mathcal{D}}_{T}$ using the independence property.
\end{proof}
Theorem \ref{thm:1} shows that the disentanglement of the representation space is helpful and might also necessary for obtaining less classify error. Then, in the following Theorem 2, we show that the less classify error on the source domain will tighten the error bound at the target domain.

\begin{theorem}
Let $R(x)$ be a fixed representation function from $\mathcal{X}$ to $\mathcal{Z}$ and $\mathcal{H}$ be a hypothesis space. Let $h^{*} =\arg\min_{h\in \mathcal{H}}( \epsilon _{T} (h),\ \ \epsilon _{S} (h))$, and let $\lambda _{S}$, $\lambda _{T}$ be the errors of $h^{*}$ with respect to $\mathcal{D}_{S}$ and $\mathcal{D}_{T}$ respectively, i.e., $\displaystyle \lambda _{T} \coloneqq \epsilon _{T}\left( h^{*}\right)$ and $\displaystyle \lambda _{S} \coloneqq \epsilon _{S}\left( h^{*}\right)$. We have
\begin{equation}
\epsilon ^{y}_{T} (h)\leq \eta +\epsilon ^{y}_{T} (S)+d_{\mathcal{H}}\left(\tilde{\mathcal{D}}_{S} ,\tilde{\mathcal{D}}_{T}\right),
\end{equation}
where $\eta \coloneqq \epsilon ^{y}_{T} (h^{*} )+\alpha ^{*}_{T} +\epsilon ^{y}_{S}\left( h^{*}\right) +\alpha ^{*}_{S} +\alpha _{S} -\alpha _{T}$.
\end{theorem}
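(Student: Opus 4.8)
The plan is to reduce the statement to the classical domain-adaptation bound of \cite{ben2007analysis} and then substitute the error decompositions supplied by Theorem \ref{thm:1}. First I would invoke the standard result that, for the ideal joint hypothesis $h^{*}$ and any $h\in\mathcal{H}$,
\[
\epsilon_{T}(h)\leq \epsilon_{S}(h)+d_{\mathcal{H}}(\tilde{\mathcal{D}}_{S},\tilde{\mathcal{D}}_{T})+\lambda_{S}+\lambda_{T},
\]
with $\lambda_{S}=\epsilon_{S}(h^{*})$ and $\lambda_{T}=\epsilon_{T}(h^{*})$. This inequality follows from a triangle-inequality argument on the hypothesis disagreement together with the definition of the $\mathcal{H}$-divergence over the induced feature distributions, exactly as in \cite{ben2007analysis}; I would treat it as the single external ingredient and not reprove it.

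The remaining work is purely algebraic: rewrite every entangled error in the bound as a disentangled error plus its $\alpha$-correction, using Theorem \ref{thm:1}. Applying $\epsilon_{T}(h)=\epsilon^{y}_{T}(h)+\alpha_{T}$ and $\epsilon_{S}(h)=\epsilon^{y}_{S}(h)+\alpha_{S}$ to the two sides, and the same decomposition to the ideal hypothesis, $\lambda_{S}=\epsilon^{y}_{S}(h^{*})+\alpha^{*}_{S}$ and $\lambda_{T}=\epsilon^{y}_{T}(h^{*})+\alpha^{*}_{T}$, I would substitute all four expressions and move $\alpha_{T}$ to the left. Collecting the source term $\epsilon^{y}_{S}(h)$ and the divergence separately, the leftover constants assemble into exactly
\[
\epsilon^{y}_{T}(h^{*})+\alpha^{*}_{T}+\epsilon^{y}_{S}(h^{*})+\alpha^{*}_{S}+\alpha_{S}-\alpha_{T}=\eta,
\]
where the lone $-\alpha_{T}$ comes from moving $\alpha_{T}$ across the inequality and the lone $+\alpha_{S}$ from the substitution for $\epsilon_{S}(h)$. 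This produces $\epsilon^{y}_{T}(h)\leq \eta+\epsilon^{y}_{S}(h)+d_{\mathcal{H}}(\tilde{\mathcal{D}}_{S},\tilde{\mathcal{D}}_{T})$, i.e.\ the claimed inequality, with the symbol written $\epsilon^{y}_{T}(S)$ in the statement understood as the disentangled source error $\epsilon^{y}_{S}(h)$.

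Since the analytic content is inherited wholesale from \cite{ben2007analysis}, I do not expect a genuine mathematical obstacle; the delicate part is the bookkeeping of signs. Concretely, I would verify that Theorem \ref{thm:1} is applied consistently to both $h$ and $h^{*}$ so that the four correction terms $\alpha_{S},\alpha_{T},\alpha^{*}_{S},\alpha^{*}_{T}$ enter $\eta$ with the correct signs, and that the divergence is genuinely taken over the induced distributions $\tilde{\mathcal{D}}_{S},\tilde{\mathcal{D}}_{T}$ rather than over the raw inputs. A secondary caveat is that Ben-David's sharpest target bound is usually phrased with the $\mathcal{H}\Delta\mathcal{H}$-divergence; if one starts from that form, the divergence symbol must be reconciled with the paper's $d_{\mathcal{H}}$, but this reconciliation does not touch the collection of $\alpha$-terms into $\eta$.
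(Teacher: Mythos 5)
Your proposal is correct and follows essentially the same route as the paper: invoke the Ben-David et al.\ bound $\epsilon_{T}(h)\leq \lambda_{T}+\lambda_{S}+\epsilon_{S}(h)+d_{\mathcal{H}}(\tilde{\mathcal{D}}_{S},\tilde{\mathcal{D}}_{T})$, then substitute the decompositions $\epsilon(\cdot)=\epsilon^{y}(\cdot)+\alpha$ from Theorem \ref{thm:1} for $h$ and $h^{*}$ on both domains and collect the correction terms into $\eta$. Your reading of $\epsilon^{y}_{T}(S)$ as the disentangled source error $\epsilon^{y}_{S}(h)$ is also the intended one.
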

\begin{proof}
Applying \cite[Theorem~1]{ben2007analysis}, we have
\begin{equation}
\label{eq:thm21}
\epsilon _{T} (h)\leq \lambda _{T} +\lambda _{S} +\epsilon _{S} (h)+d_{\mathcal{H}}\left(\tilde{\mathcal{D}}_{S} ,\tilde{\mathcal{D}}_{T}\right) ,
\end{equation}
where 
\begin{equation*}
\displaystyle d_{\mathcal{H}}\left(\tilde{\mathcal{D}}_{S} ,\tilde{\mathcal{D}}_{T}\right) =2\sup _{h\in \mathcal{H}}\left| \operatorname{Pr}_{D_{S}} [\mathcal{Z}_{h} \vartriangle \mathcal{Z}_{h^{*}} ]-\operatorname{Pr}_{D_{T}} [\mathcal{Z}_{h} \vartriangle \mathcal{Z}_{h^{*}} ]\right| .
\end{equation*}
Then based on Theorem 1, combining with Eq.
\ref{eq:thm21} and Eq. \ref{eq:thm1}, we further obtain
\begin{equation}
\begin{aligned}
\epsilon ^{y}_{T} (h) & \leq \lambda _{T} +\lambda _{S} +\epsilon _{S} (h)+d_{\mathcal{H}}\left(\tilde{\mathcal{D}}_{S} ,\tilde{\mathcal{D}}_{T}\right) -\alpha _{T}\\
& \leq \lambda _{T} +\lambda _{S} +\epsilon _{S} (h)+d_{\mathcal{H}}\left(\tilde{\mathcal{D}}_{S} ,\tilde{\mathcal{D}}_{T}\right).
\end{aligned}
\end{equation}
Note that based on Eq. \ref{eq:thm1}, we denote $\lambda _{T} =\epsilon _{T}\left( h^{*}\right) =\epsilon ^{y}_{T}\left( h^{*}\right) +\alpha ^{*}_{T}$.
 and the error upper bound for the target domain can be further derived as follow,
\begin{equation}
\begin{aligned}
\epsilon ^{y}_{T} (h) & \leq \lambda _{T} +\lambda _{S} +\epsilon _{S} (h)+d_{\mathcal{H}}\left(\tilde{\mathcal{D}}_{S} ,\tilde{\mathcal{D}}_{T}\right) -\alpha _{T}\\
& \leq \epsilon ^{y}_{T} (h^{*} )+\alpha ^{*}_{T} +\epsilon ^{y}_{S}\left( h^{*}\right) +\alpha ^{*}_{S} +\epsilon ^{y}_{S} (h)\\
& +\alpha _{S} +d_{\mathcal{H}}\left(\tilde{\mathcal{D}}_{S} ,\tilde{\mathcal{D}}_{T}\right) -\alpha _{T}\\
& \leq \eta +\epsilon ^{y}_{T} (S)+d_{\mathcal{H}}\left(\tilde{\mathcal{D}}_{S} ,\tilde{\mathcal{D}}_{T}\right) ,
\end{aligned}
\end{equation}
where $\eta \coloneqq \epsilon ^{y}_{T} (h^{*} )+\alpha ^{*}_{T} +\epsilon ^{y}_{S}\left( h^{*}\right) +\alpha ^{*}_{S} +\alpha _{S} -\alpha _{T}$.
\end{proof}

\section{Experiments and Results}

\begin{table*}[h]
    \centering
    \resizebox{0.65\textwidth}{23mm}{
    \begin{tabular}{llllllll}
        \hline
        Mehtods                     & $A \rightarrow W$ & $D \rightarrow W$ & $W \rightarrow D$ & $A \rightarrow D$ & $D \rightarrow A$ & $W \rightarrow A$ & Avg\\
        \hline
        ResNet-50 \cite{he2016deep} & 68.4              &96.7               &99.3               &68.9               &62.5               &60.7               & 76.1\\
        TCA \cite{pan2011domain}      & 72.7              &96.7               &99.6               &74.1               &61.7               &60.9               & 77.6\\
        GFK \cite{gong2012geodesic}     & 72.8              &95.0               &98.2               &74.5               &63.4               &61.0               & 77.5\\
        DAN \cite{long2015learning}    & 80.5              &97.1               &99.6               &78.6               &63.6               &62.8               & 80.4\\
        RTN \cite{long2016unsupervised}     & 84.5              &96.8               &99.4               &77.5               &66.2               &64.8               & 81.6\\
        DANN \cite{ganin2016domain}   & 82.0              &96.9               &99.1               &79.7               &68.2               &67.4               & 82.2\\
        ADDA \cite{tzeng2017adversarial}   & 86.2              &96.2               &98.4               &77.8               &69.5               &68.9               & 82.9\\
        JAN \cite{long2017deep}     & 85.4              &97.4               &99.8               &84.7               &68.6               &70.0               & 84.3\\
        MSTN \cite{xie2018learning}     & 86.9              &96.7               &99.9               &87.3               &66.9               &68.4               & 84.3\\
        CDAN-M \cite{long2018conditional}    & \textbf{93.1}   &98.6   &\textbf{100.0}     &\textbf{93.4}      &71.0               &70.3               & 87.7\\
        DSR\_DM($\delta=1$)            & 90.7              &97.1               &99.2               &88.8               &71.5               &71.2               & 86.4\\
        DSR\_DM($\delta=2$)            & 91.5              &97.2               &99.5               &88.5               &71.5               &72.1               & 86.7\\
        Ours                        & \textbf{93.1}     &\textbf{98.7}      &99.8               &92.4               &\textbf{73.5}      &\textbf{73.9}      & \textbf{88.6}\\
        \hline
    \end{tabular}}
    \caption{Accuracy (\%) on Office-31 for unsupervised domain adaptation (ResNet)}
    \label{tab:office_31}
\end{table*}

\begin{table*}[h]
    \centering
    \resizebox{0.98\textwidth}{18mm}{
        \begin{tabular}{llllllllllllll}
            \hline
            Method&Ar$\rightarrow$Cl&Ar$\rightarrow$Pr&Ar$\rightarrow$Rw&Cl$\rightarrow$Ar&Cl$\rightarrow$Pr&Cl$\rightarrow$Rw&Pr$\rightarrow$Ar&Pr$\rightarrow$Cl&Pr$\rightarrow$Rw&Rw$\rightarrow$Ar&Rw$\rightarrow$Cl&Rw$\rightarrow$Pr&Avg \\
            \hline
            ResNet-50 \cite{he2016deep}           & 34.9 & 50.0 & 58.0 & 37.4 & 41.9 & 46.2 & 38.5 & 31.2 & 60.4 & 53.9 & 41.2 & 59.9 & 46.1 \\ 
            DAN \cite{long2015learning}               & 43.6 & 57.0 & 67.9 & 45.8 & 56.5 & 60.4 & 44.0 & 43.6 & 67.7 & 63.1 & 51.5 & 74.3 & 56.3 \\
            DANN \cite{ganin2016domain}             & 45.6 & 59.3 & 70.1 & 47.0 & 58.5 & 60.9 & 46.1 & 43.7 & 68.5 & 63.2 & 51.8 & 76.8 & 57.6 \\
            JAN \cite{long2017deep}                  & 45.9 & 61.2 & 68.9 & 50.4 & 59.7 & 61.0 & 45.8 & 43.4 & 70.3 & 63.9 & 52.4 & 76.8 & 58.3 \\
            MSTN \cite{xie2018learning}               & 49.3 & 67.6 & 74.7 & 49.6 & 63.7 & 64.5 & 50.7 & 43.7 & 73.0 & 62.7 & 52.2 & 77.9 & 60.9 \\
            CDAN-M  \cite{long2018conditional}            & 50.6 & 65.9 & 73.4 & 55.7 & 62.7 & 64.2 & 51.8 & 49.1 & 74.5 & \textbf{68.2} & \textbf{56.9} & \textbf{80.7} & 62.8 \\ 
            DSR\_DM($\delta=1$)            & 50.5 & 69.2 & 75.5 & 52.9 & 64.0 & 65.6 & 53.0 & 45.0 & 73.5 & 63.2 & 50.8 & 78.3 & 61.8 \\
            DSR\_DM($\delta=2$)               & 52.3 & 70.9 & 76.5 & 54.0 & 65.2 & 67.0 & 53.5 & 45.0 & 73.7 & 62.7 & 50.7 & 78.7 & 62.5 \\
            DSR                                  & \textbf{53.4} & \textbf{71.6} & \textbf{77.4} & \textbf{57.1} & \textbf{66.8} & \textbf{69.3} & \textbf{56.7} & \textbf{49.2} & \textbf{75.7} & 68.0 & 54.0 & 79.5 & \textbf{64.9} \\  
            \hline
    \end{tabular}}
    \caption{Accuracy (\%) on Office-home for unsupervised domain adaptation (ResNet)}
    \label{tab:office_home}
\end{table*}

\subsection{Setup}
Office-31 is a standard benchmark for visual domain adaptation, which contains 4,652 images and 31 categories from three distinct domains: Amazon (A), Webcam (W) and DSLR (D).

Office-Home is a more challenging domain adaptation dataset than Office-31, which consists of around 15,500 images from 65 categories of everyday objects. This dataset is organized into four domains: Art (Ar), Clipart (Cl), Product (Pr) and Real-world (Rw).

\subsubsection{Compared Approaches }
Beside the classical approaches, we also compare our disentangled semantic representation model with some deep transfer learning methods.
Two recently proposed semantic enhanced methods, CDAN \cite{long2018conditional} and MSTN \cite{xie2018learning}, are also compared in the experiment. Note that, CDAN conditions the adversarial adaptation models on discriminative information conveyed in the classifier predictions, and MSTN learns semantic representation by aligning labeled source centroid and pseudo-labeled target centroid.

\subsection{Result}
\subsubsection{Office-31 Result} 
The classification accuracies on the Office-31 dataset for unsupervised domain adaptation on ResNet-50 are shown in Table \ref{tab:office_31}. Our DSR model significantly outperforms all other baselines on most of the transfer tasks. It is remarkable that our method promotes the classification accuracies substantially on hard transfer tasks, e.g., \text{D$\rightarrow$A, W$\rightarrow$A}, and produces comparable results on the other relatively simple tasks, e.g.,  \text{A$\rightarrow$W, D$\rightarrow$W}. However, the result of DSR on the \text{W$\rightarrow$D and A$\rightarrow$D} tasks are lower than that of some compared approaches. This is because domain $\textsl{DSLR}$ (D) only has total 498 images for 31 classes and some classes even only have less than 10 samples, it's insufficient for our method to reconstruct the disentangled semantic representation. We conduct the Wilcoxon signed-rank test \cite{lowry2014concepts} on the reported accuracies, the results are as follows: on the Office-31 data set, our method is comparable with CDAN-M, and significantly outperforms all the other baselines with the p-value threshold 0.05.  

\subsubsection{Office-home Result} 
Similar to the results on Office-31, the DSR model also outperforms all other baselines on most of the tasks, as reported in Table 2. Note that our method achieves a great improvement when the source domain is Art (Ar) or Clipart (Cl) and performs slightly worse than CDAN-M when the source domain is Real World (RW). This is because the Ar and the clipart (Cl) domain contain relatively simpler pictures and more complex scenarios than the other domains and our DSR model can extract such semantic representation easily and thus achieve good performance on this source domain. However, in the Real World (RW) domain, the pictures are taken in real life and there a lot of ambiguous samples, .e.g., pictures with monitor, computer and laptop are tagged with the same label, which implies that the semantic information is difficult to be disentangled and extracted on this domain. We also conduct the Wilcoxon signed-rank test \cite{lowry2014concepts} on the reported accuracies, our method significantly outperforms the baselines, with the p-value threshold 0.05.

\subsubsection{The Study of the Disentangled Semantic Representation} 
To study the effectiveness of the disentangled semantic representation, we compare our methods with two approaches using similarly adversarial learning strategy but different representations on the task \textbf{Ar}$\rightarrow$\textbf{Cl}. Figure. \ref{fig:tsne} (a)-(c) show the visualization of the extracted features using  t-SNE. As shown in the figure, DSR obtains the best alignment among the three representations. Both DANN and MSTN have a large number of samples are falsely aligned. This result verifies the effectiveness of DSR.

Such results also can be observed in Table 1 and Table 2. For example, DSR achieves remarkably outstanding results on some transfer tasks, e.g. \text{D$\rightarrow$A, W$\rightarrow$A} on the Office-31 dataset and all the tasks except for those using RW as the source domain on the Office-home dataset. These results show a common phenomenon that the samples of the source domain are more complex than that of the target domain, i.e., the source domain has more scenarios than the target domain. Such common phenomenon also shows the advantage of our disentangled semantic representation.

\begin{figure}[H] 
	\centering
	\includegraphics[width=0.48\textwidth]{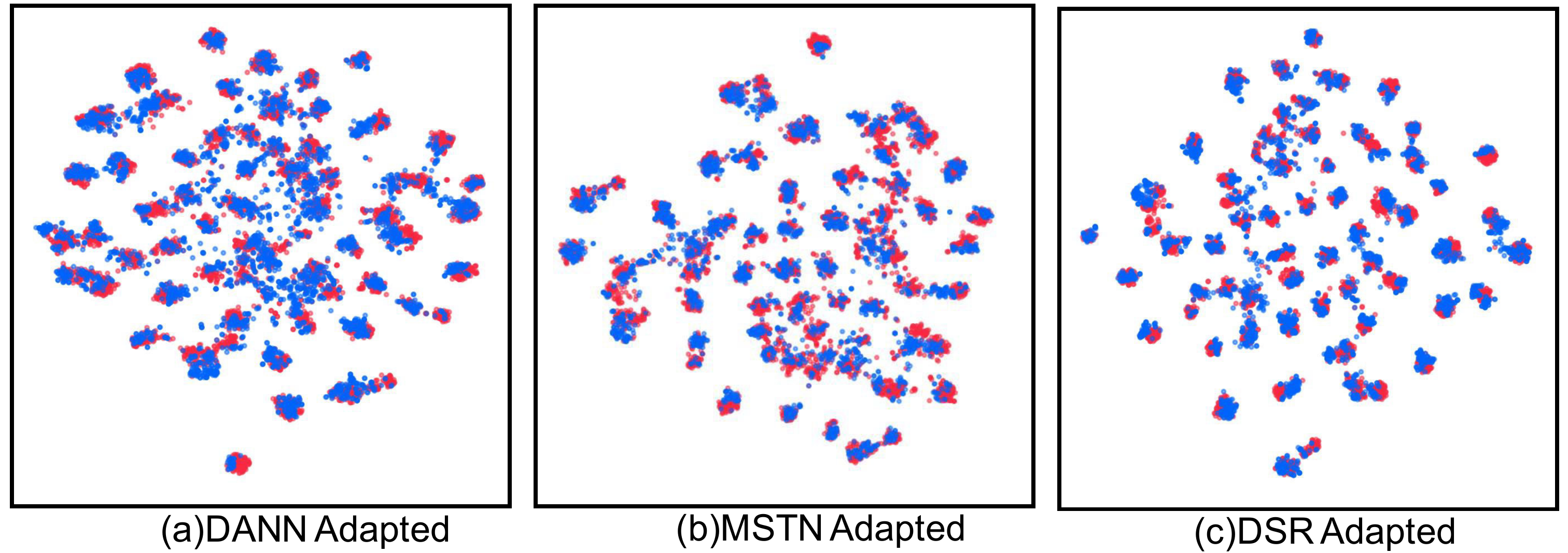}
	\caption{The t-SNE visualization of deep features extracted by DANN (a), MSTN (b) and DSR (c). The red points are source domain samples and the blue points are target domain samples.}
	\label{fig:tsne}
\end{figure}

\subsubsection{Ablation Study of the Dual Adversarial Learning}
To study the effectiveness of the dual adversarial learning module, we first train the standard DSR model until convergence, then train the model without the domain adversarial learning module. Such ablated model is named DSR\_WD in the experiment. The experiment results are shown in Table \ref{tab:office_31} and Table \ref{tab:office_home}.
Comparing the result of DSR and DSR\_WD ($\delta$=1), we find that the performance drops because the semantic information is drained away without disentanglement. Even when we use $\delta=2$, the performance of the ablated model is still worse than the original one. These results verify that the dual adversarial learning module can push the semantic information into the semantic latent variables $\mathbf{z}_y$, and simultaneously, push the domain information into the domain latent variables $\mathbf{z}_d$.

\section{Conclusion}
This paper presents a disentangled semantic representation model for the unsupervised domain adaptation task. Different from previous work
 , our approach extracts the disentangled semantic representation on the recovered latent space, following the causal model of the data generation process. Our approach is also featured with the variational auto-encoder based latent space recovery and the dual adversarial learning based disentangle of the representation. The success of the proposed approach not only provides an effective solution for the domain adaptation task, but also opens the possibility of disentanglement based learning methods.
\section*{Acknowledgments}
This research was supported in part by NSFC-Guangdong Joint Found (U1501254), Natural Science Foundation of China (61876043), Natural Science Foundation of Guangdong (2014A030306004, 2014A030308008), Guangdong High-level Personnel of Special Support Program (2015TQ01X140) and Pearl River S\&T Nova Program of Guangzhou (201610010101). Kun Zhang would like to acknowledge the support by National Institutes of Health (NIH) under Contract No. NIH-1R01EB022858-01, FAINR01EB022858, NIH-1R01LM012087, NIH5-5U54HG008540-02, and FAIN- U54HG008540, by the United States Air Force under Contract No. FA8650-17-C-7715, and by National Science Foundation (NSF) EAGER Grant No. IIS-1829681. The NIH, the U.S. Air Force, and the NSF are not responsible for the views reported here. We would like to thank Dr Tom Fu from ADSC and professor Ke Yiping from Nanyang Technological University for their help and supports on this work.

\appendix

\bibliographystyle{named}
\bibliography{ijcai19}

\end{document}